%% file: paper.tex
\documentclass{article}

\usepackage{iclr2027_conference,times}
\iclrfinalcopy

\input{math_commands.tex}

\usepackage[utf8]{inputenc} 
\usepackage[T1]{fontenc}    
\usepackage{url}            
\usepackage{booktabs}       
\usepackage{amsfonts}       
\usepackage{nicefrac}       
\usepackage{microtype}      
\usepackage{xcolor}         

\usepackage{graphicx}
\usepackage{subfigure}
\usepackage{amsmath}
\usepackage{amsthm}
\usepackage{bm}
\usepackage{bbm}
\usepackage{wrapfig}
\usepackage{tcolorbox}

\usepackage{tikz}
\usepackage{pgfplots}
\pgfplotsset{compat=1.18}
\usetikzlibrary{arrows.meta,positioning,fit,backgrounds}

\usepackage{algorithm}
\usepackage{algorithmic}
\usepackage{graphicx}
\usepackage{textcomp}
\usepackage{subfigure} 
\usepackage{placeins}
\usepackage{multirow}
\usepackage{multicol}
\usepackage{xcolor, colortbl}
\usepackage{bm}
\usepackage{bbm}
\usepackage[normalem]{ulem}
\usepackage{enumitem}
\usepackage{caption}
\usepackage{nicefrac}
\usepackage{amssymb}
\usepackage{makecell}

\newtheorem{lemma}{Lemma}
\newtheorem{proposition}{Proposition}
\newtheorem{corollary}{Corollary}
\newtheorem{definition}{Definition}

\newcommand{\method}{{GradCodeS}}

\newcommand{\stat}[2]{\ensuremath{#1_{\scriptscriptstyle #2}}}
\newcommand{\beststat}[2]{\ensuremath{\mathbf{#1}_{\scriptscriptstyle #2}}}
\newcommand{\secondstat}[2]{\ensuremath{\underline{#1}_{\scriptscriptstyle #2}}}

\usepackage{array}
\newcolumntype{L}[1]{>{\raggedright\let\newline\\\arraybackslash\hspace{0pt}}m{#1}}
\newcolumntype{C}[1]{>{\centering\let\newline  \\\arraybackslash\hspace{0pt}}m{#1}}
\newcolumntype{R}[1]{>{\raggedleft\let\newline \\\arraybackslash\hspace{0pt}}m{#1}}

\renewcommand{\eqref}[1]{(\ref{#1})}

\title{Fine-Tuning Low-Bit Models with Gradient in Quantized Code Space}

\author{
Shiguang Wu\textsuperscript{1},
Zhouchen Lin\textsuperscript{2},
Quanming Yao\textsuperscript{1}\\
\textsuperscript{1}Department of Electronic Engineering, Tsinghua University\\
\textsuperscript{2}School of Intelligence Science and Technology, Peking University\\
\texttt{wsg23@mails.tsinghua.edu.cn, qyaoaa@tsinghua.edu.cn}
}

\begin{document}

\maketitle	
\lhead{}

\begin{abstract}
Fine-tuning Low-bit models aims to adapt a quantized model while keeping the final deployed checkpoint in the same low-bit form. This setting is practically important as it reduces memory and inference cost for storage and deployment.
Under this constraint, adaptation becomes an optimization problem over quantization codes and scales. 
Existing
continuous low-bit training is efficient, but it can be distorted by straight through estimation error or by post-quantize gap; discrete search is deployment-faithful, but it is often too inefficient under a finite training budget.
We propose code surrogate gradient as the first order signal in deployable code space to acceleate optimization, and performing guided search to preserve deployment faithfulness.
Experiments across arithmetic reasoning, instruction following, and structured language understanding show that \method{} consistently improves fine-tuning low-bit models across different quantization datatypes. Code is provided at \url{https://github.com/ovo67/GradCodes}.
\end{abstract}


\section{Introduction}

Large language models (LLMs) are increasingly deployed in low-precision formats such as INT4, NF4, and MXFP4 to reduce storage, memory traffic, and inference cost~\citep{xiao2023smoothquant,lin2024awq}. These savings matter not only for a shared backbone, but also when many task- or user-specific checkpoints must be stored under a fixed memory budget. We study a strict fully low-bit setting in which the targeted weights of every adapted checkpoint remain in the target quantized representation, with only datatype-permitted scale metadata and no high-precision residual adapter at inference time~\citep{jeon2025l4q,malinovskii2024pvtuning}. Under this constraint, fine-tuning is no longer optimization in a nearly continuous floating-point space; it is optimization over quantization codes and scales.

One practical strategy retains continuous optimization. Quantized-backbone adapter methods train high-precision LoRA-style parameters and either keep them at inference time or merge and requantize them afterward~\citep{dettmers2023qlora,li2023loftq,loeschcke2024loqt,xu2023qalora}. This is effective when mixed-precision deployment or a separate conversion stage is acceptable. Under the stricter setting above, however, retaining the adapter violates the deployment constraint, while merging and requantizing can make the evaluated low-bit state differ from the state optimized during training. Quantization-aware and compressed-representation methods move optimization closer to the final checkpoint~\citep{jeon2025l4q,malinovskii2024pvtuning}, but still leave a fundamental question: how should first-order information be expressed when the actual decision variable is a quantization-code index?

Direct discrete and zeroth-order approaches instead evaluate candidates in or near the quantized model family~\citep{zhou2025quzo,shang2025qzo,xu2026qes}, but random or population proposals can require many forward evaluations in high-dimensional code spaces. The central challenge is to combine deployment-faithful selection with first-order directional guidance.

To be deployment-faithful, we optimize over quantized code coordinates rather than weight. To be efficient, we try to find more useful information beyond zero-order.
Based on this principle, we propose \method, a gradient-guided search method. We first
identify that mapping ordinary gradient in weight space to code space is generally not the steepest descent direction in code space, as illustrated in Figure~\ref{fig:spaces}, thus introducing a \emph{code-coordinate surrogate gradient} by codebook continuation, serving as first-order information in code space with theoretical support.
\method{} uses code-coordinate surrogate gradient descent to provide reference and then  samples multiple nearby code candidates biased toward that reference and selects an update only according to its realized deployed loss. Thus every selected iterate is a valid low-bit checkpoint.
This rule is parameterization-agnostic and datatype-agnostic: it can be instantiated on full weights, integrated with structured parameterizations such as LoRA, combined with joint scale optimization, and paired with different quantization datatypes.

%

\begin{figure}[t]
	\vspace{-5pt}
	\centering
	\subfigure[Geometry mismatch between weight and code spaces.\label{fig:spaces}]{
		\includegraphics[width=0.50\linewidth]{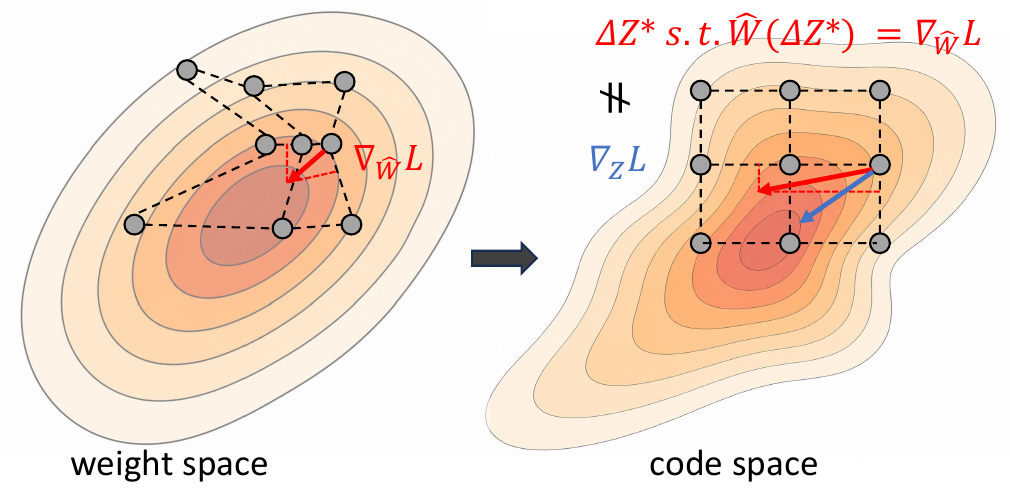}}
	\hfill
	\subfigure[Accuracy under different deployment regimes.\label{fig:performance}]{
		\includegraphics[width=0.48\textwidth]{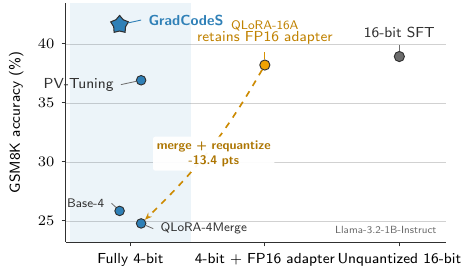}}
	\caption{Motivation for deployment-faithful code-space optimization. (a) Group scales and non-uniform codebook gaps distort local geometry, so a weight gradient does not identify the best discrete code move. (b) In the Llama-3.2-1B GSM8K example, continuous adaptation either retains an FP16 adapter or loses 13.4 points after QLoRA requantization; \method{} improves the fully 4-bit result while retaining a deployable checkpoint.}
	\label{fig:motivation}
\end{figure}

Our contributions can be summarized as follows:
\begin{itemize}[leftmargin=*]
	\item We derive a \emph{code-coordinate surrogate gradient} and establish its local steepest-descent interpretation under codebook continuation.
	
	\item We develop a guided candidate-sampling algorithm that combines first-order guidance with loss-based selection over deployable low-bit candidates.
	
	\item We evaluate \method{} across three task families, multiple quantization datatypes, and both full and structured code parameterizations, observing consistent gains in the fully low-bit setting.
\end{itemize}

\section{Related Work}

\textbf{Continuous low-bit adaptation.} Low-bit adaptation commonly retains continuous optimization. Quantized-backbone parameter-efficient methods train high-precision adapters, improve their quantized initialization, or design updates that can later be merged~\citep{dettmers2023qlora,li2023loftq,xu2023qalora,loeschcke2024loqt}. They are effective when mixed-precision inference or a separate conversion stage is acceptable, but under strict fully low-bit deployment the optimized and deployed states may differ because an adapter remains at inference time or a merged update must be requantized. Quantization-aware and compressed-representation methods, including L4Q and PV-Tuning, optimize closer to the final quantized checkpoint~\citep{jeon2025l4q,malinovskii2024pvtuning}. Their focus is efficient optimization of differentiable or structured compressed representations. Our complementary question is how a weight gradient should be expressed in ordered code coordinates without treating the resulting continuous reference as an accepted update.

\textbf{Direct low-bit optimization.} This approach avoids a separate deployment conversion. QuZO and QZO use zeroth-order estimates for low-precision or quantization-aware fine-tuning, while QES applies evolution strategies to quantized parameters~\citep{zhou2025quzo,shang2025qzo,xu2026qes}. These methods reduce or avoid backpropagation and evaluate search information through forward perturbations or population fitness. Although candidate quality can therefore be measured in or near the quantized model family, proposal efficiency remains difficult in a high-dimensional space under a finite evaluation budget. \method{} instead spends one backward pass to obtain an analytical local signal, uses it only to guide where discrete search looks, and still selects updates by the realized loss of deployable low-bit candidates. 

\textbf{Gradient-informed discrete proposals.} A derivative can shape a discrete proposal without being followed deterministically. Representative samplers construct local, Langevin-style, Newton-style, or importance proposals from gradients with respect to discrete inputs~\citep{grathwohl2021oops,zhang2022langevin,sun2023dlmc,xiang2023newton,liu2022gradient}; related optimization methods bias random search with surrogate gradients or adapt discrete sampling to combinatorial objectives~\citep{maheswaranathan2019guided,sun2023revisiting,li2025reheated}. These works motivate separating proposal construction from state selection, but they do not resolve two features of low-bit fine-tuning. First, the available derivative is with respect to recovered weights, whereas the decision variable is a code index whose geometry is distorted by group scales and non-uniform codebook gaps. Second, our goal is loss minimization over a finite candidate set rather than preservation of a stationary distribution. \method{} addresses these differences through a quantization-specific code-coordinate signal, proposal-only gradient guidance, and loss-based selection of deployable states.


\section{Problem Setup}
\label{sec:preliminaries}

Fix a target quantization datatype $q$, specified by its bitwidth
$b_q$, numerical codebook $\mathcal C_q$, grouping structure
$\mathcal G_q$, and admissible scale space $\mathcal S_q$. Its ordered
codebook is
\begin{equation*}
	\mathcal C_q
	=\{c^{(q)}_1,\ldots,c^{(q)}_{K_q}\},
	\qquad
	c^{(q)}_1\leq\cdots\leq c^{(q)}_{K_q},
	\qquad K_q\leq 2^{b_q}.
\end{equation*}
The indices follow numerical order and need not match hardware bit-pattern
order. For a weight matrix
$W\in\mathbb R^{d_{\mathrm{out}}\times d_{\mathrm{in}}}$, let
$[K_q]:=\{1,\ldots,K_q\}$ and
$\mathcal Z_q:=[K_q]^{d_{\mathrm{out}}\times d_{\mathrm{in}}}$.
A code matrix $Z\in\mathcal Z_q$ selects values entrywise as
$\mathcal C_q(Z)=[c^{(q)}_{Z_{ij}}]_{ij}$, where $\big[\psi(i,j)\big]_{ij}$ denotes the matrix whose $(i,j)$-th element is $\psi(i,j)$. Each entry belongs to a group
$g_q(i,j)\in\mathcal G_q$ with scale metadata
$s=\{s_g\}_{g\in\mathcal G_q}\in\mathcal S_q$. Defining
$S_q(s)=[s_{g_q(i,j)}]_{ij}$, the deployed weight is
\begin{equation}
	\widehat W_q(Z,s)
	=
	S_q(s)\odot\mathcal C_q(Z),
	\label{eq:recover}
\end{equation}
where $\odot$ is element-wise multiplication. This representation covers the evaluated datatypes
NF4, INT4, and MXFP4; their datatype-specific components are given
in Appendix~\ref{app:datatype}.

For a model, let $\mathcal L$ be the set of targeted quantized linear
layers. We use $Z$ and $s$ for their layer-wise collections and let
$\mathcal Q_q$ denote the product of the corresponding layer-wise
feasible code and scale sets. We use
$\widehat{\mathbf W}_q(Z,s)$ for the recovered weights. 
Parameters outside
$\mathcal L$ are left unquantized.

Given an initial feasible state $(Z_0,s_0)$, a target distribution
$\mathcal D$, a model $f$, and a loss function $\ell$, we formulate
fully low-bit fine-tuning as
\begin{equation}
\min\nolimits_{(Z,s)\in\mathcal Q_q}
L(Z,s)
:=
\mathbb E_{(x,y)\sim\mathcal D}
\left[
\ell\!\left(
f\!\left(x;\widehat{\mathbf W}_q(Z,s)\right),y
\right)
\right].
\label{eq:general-objective}
\end{equation}
Unlike approaches that optimize a continuous auxiliary state and
quantize only after training, \eqref{eq:general-objective} is defined
directly over deployable states. Here $Z$ is discrete, while $s$ is
continuous in the default setting but constrained to its admissible
space. Since $q$ is fixed, we henceforth omit datatype annotations from
$K_q$, $c_k^{(q)}$, $\mathcal C_q$, $\mathcal Z_q$, $\mathcal S_q$, and
$S_q$. Thus, for a single matrix,
$\widehat W(Z,s)=S(s)\odot\mathcal C(Z)$. Bold $\widehat{\mathbf W}$ is
reserved for the full collection of targeted weights, while $L$ denotes
the corresponding objective. 


\section{Method}
\label{sec:metitems}

%

To optimize \eqref{eq:general-objective}, \method{} separates gradient
guidance from discrete state selection. Section~\ref{sec:code-gradient}
constructs a geometry-aware code surrogate gradient, and
Section~\ref{sec:code-properties} establishes its local and discrete
first-order properties. Section~\ref{sec:guided} then incorporates this
signal into a \emph{guide--sample--evaluate--select} procedure that
accepts updates according to the realized loss of deployable low-bit
states.

\subsection{Constructing the Code Surrogate Gradient}
\label{sec:code-gradient}


The ordinary weight gradient $\nabla_{\widehat W}L$ is not directly a signal for the discrete code index because group scales and local codebook gaps rescale a unit code move. Thus in general, the direction in code space corresponding to the steepest descent direction in weight space is \textbf{not} the steepest descent direction in code space, as illustrated in Figure~\ref{fig:def1-illustration}.

For a matrix $A$, write $A_+=[\max\{A_{ij},0\}]_{ij}$ and $A_-=[\max\{-A_{ij},0\}]_{ij}$.
Let $Z^\pm=\mathrm{clip}(Z\pm\mathbbm{1},1,K)$ and define the forward and backward gaps
\[
\begin{aligned}
G^+(Z)&=\bigl[c_{Z^+_{ij}}-c_{Z_{ij}}\bigr]_{ij},
\quad
G^-(Z)&=\bigl[c_{Z_{ij}}-c_{Z^-_{ij}}\bigr]_{ij}.
\end{aligned}
\]
The quantities above characterize the local geometry of the two neighboring code moves. To use a weight gradient for discrete search, we need a code-coordinate signal that reflects this geometry; Definition~1 constructs such a signal.

\begin{definition}[Local effective step and code surrogate gradient]\label{def:sg}
	Given a quantized state $(Z,s)$ with fixed codebook $\mathcal C$, let $\sigma=\operatorname{sign}(\nabla_{\widehat W}L(Z,s))$ and define the descent-aligned effective step by
	$D\bigl(Z,s,\nabla_{\widehat W}L(Z,s)\bigr)
		=S(s)\odot\bigl(G^+(Z)\odot\sigma_-+G^-(Z)\odot\sigma_+\bigr)$.
	The directional code surrogate gradient is then
	\begin{align}
		\nabla_Z L(Z,s)
		=
		\nabla_{\widehat W}L(Z,s)
		\odot
		D\bigl(Z,s,\nabla_{\widehat W}L(Z,s)\bigr).
		\label{eq:glat}
	\end{align}
\end{definition}

Figure~\ref{fig:def1-illustration} gives a simplified illustration of Definition~\ref{def:sg}. It uses an identity codebook, whose adjacent gaps are all one, so the effective step reduces to $D=S(s)$. The example therefore shows
even without nonuniform codebook gaps, accounting for the deployed-weight change of each code coordinate yields a substantially better descent direction. 
Appendix~\ref{app:code-gradient-derivation} derives this construction from the local piecewise-linear continuation.

\begin{figure}[ht]
\centering
\begin{tcolorbox}[
  colback=black!1,
  colframe=black!22,
  boxrule=0.5pt,
  arc=1.5pt,
  left=5pt,right=5pt,top=2pt,bottom=2pt,
  before skip=0pt,after skip=0pt]
\small
Let $\mathcal C(Z)=Z$, $S(s)=(0.1,10)$, $L(\hat W)=\hat W_1+0.6\hat W_2$. $\nabla_{\widehat W}L=(1,0.6)$, so mapped code move (to achieve $\nabla_{\widehat W}L$) is $(-10,-0.06)$;  while $D=S(s)$ and $\nabla_ZL=(0.1,6)$. For an equal code-space step $\Delta Z=\epsilon d$ with $\|d\|_2=1$:
\par\vspace{2pt}
\renewcommand{\arraystretch}{1.18}
\begin{tabular*}{\linewidth}{@{\extracolsep{\fill}}lcc@{}}
\toprule
Direction & Unit direction $d$ & Loss change $\Delta L$ \\
\midrule
\textcolor{blue!65!black}{Mapped weight gradient}
  & $\displaystyle{(-10,-0.06)}/{\sqrt{100.0036}}$ & $-0.136\epsilon$ \\
\textcolor{red!70!black}{Definition~1 surrogate}
  & $\displaystyle{(-0.1,-6)}/{\sqrt{36.01}}$ & $-6.001\epsilon$ \\
\bottomrule
\end{tabular*}
\noindent\textbf{Result.} The effective-step correction yields about $44\times$ more loss decrease for the same code-space step.
\end{tcolorbox}
\vspace{5pt}
\caption{A two-coordinate illustration of the effective-step correction in Definition~1.}
\label{fig:def1-illustration}
\vspace{-20pt}
\end{figure}

\subsection{First-Order Properties}
\label{sec:code-properties}

The results below summarize the overall argument. Lemma~1 gives the surrogate a local continuous interpretation. Proposition~1 establishes the central discrete connection by showing exact first-order consistency for feasible one-level code moves. Corollary~1 then converts this identity into a realized-loss bound used by the candidate-search analysis in Proposition~2.

\begin{lemma}[Local steepest-descent interpretation]
	\label{lem:code-local-surrogate}
	Let $D_{Z,s}:=D\bigl(Z,s,\nabla_{\widehat W}L(Z,s)\bigr)$. Define the local directional surrogate function of $L(Z,s)$ by extending it to real-valued interpolation $\Delta Z\in\mathbb R^{d_{\mathrm{out}}\times d_{\mathrm{in}}}$ by
$$
	\bar{L}_{Z,s}(\Delta Z)
	=
	L\!\left(
	\widehat W(Z,s)
	+
	D_{Z,s}\odot\Delta Z
	\right).
$$
	Then
$
	\left.
	\nabla_{\Delta Z}
	\bar{L}_{Z,s}(\Delta Z)
	\right|_{\Delta Z=0}
	=
	\nabla_Z L(Z,s).
	$
	Consequently, whenever $\nabla_Z L(Z,s)\neq 0$, the direction $-\nabla_Z L(Z,s)$ is the steepest descent direction of $\bar{L}_{Z,s}$ at $\Delta Z=0$ under the Frobenius norm.
\end{lemma}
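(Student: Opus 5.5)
The plan is a direct chain-rule computation followed by the Cauchy--Schwarz characterization of steepest descent. We regard $L$ as a differentiable function of the recovered weights $\widehat W$, which is the setting in which $\nabla_{\widehat W}L$ is defined. The effective step $D_{Z,s}$ is evaluated once at the current state and then frozen. This is the only subtle point: $D$ depends on $\operatorname{sign}(\nabla_{\widehat W}L)$, so it is piecewise constant in general. In the surrogate $\bar L_{Z,s}$, however, it is a fixed matrix. Hence $\Delta Z\mapsto \widehat W(Z,s)+D_{Z,s}\odot\Delta Z$ is an affine map from $\mathbb R^{d_{\mathrm{out}}\times d_{\mathrm{in}}}$ to weight space.

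First, we compute the differential of this affine map. Its linear part is the Hadamard scaling $H\mapsto D_{Z,s}\odot H$, which is self-adjoint with respect to the Frobenius inner product, because $\langle A, D\odot H\rangle_F=\langle D\odot A,H\rangle_F$. By the chain rule, for any direction $H$,
\[
\frac{d}{dt}\bar L_{Z,s}(tH)\Big|_{t=0}=\bigl\langle \nabla_{\widehat W}L(Z,s),\,D_{Z,s}\odot H\bigr\rangle_F=\bigl\langle \nabla_{\widehat W}L(Z,s)\odot D_{Z,s},\,H\bigr\rangle_F .
\]
By the Riesz representation, this identifies $\nabla_{\Delta Z}\bar L_{Z,s}(0)=\nabla_{\widehat W}L(Z,s)\odot D_{Z,s}$, which is exactly $\nabla_Z L(Z,s)$ in \eqref{eq:glat}.

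Second, we prove the steepest-descent claim. For a unit direction $H$ with $\|H\|_F=1$, the directional derivative equals $\langle \nabla_Z L,H\rangle_F$. By Cauchy--Schwarz, this is at least $-\|\nabla_Z L\|_F$, with equality if and only if $H=-\nabla_Z L/\|\nabla_Z L\|_F$; the hypothesis $\nabla_Z L\neq 0$ makes this normalization well defined. So $-\nabla_Z L(Z,s)$ is the unique steepest descent direction of $\bar L_{Z,s}$ at $0$ under the Frobenius norm.

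The main obstacle is the conceptual point about freezing $D$, not any calculation. I would state explicitly that the lemma concerns the local surrogate with $D_{Z,s}$ fixed. One might instead want to differentiate a continuation in which the sign-dependent gap changes with the direction; that object is only directionally differentiable, and this lemma does not address it. I would also note the degenerate case in which $D$ has zero entries, either at the codebook boundary after clipping or where the gradient component vanishes. In that case the corresponding coordinates of $\nabla_Z L$ are zero, and the argument goes through unchanged.
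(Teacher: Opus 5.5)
Your proposal is correct and follows essentially the same route as the paper: freeze $D_{Z,s}$ so the inner map is affine, apply the chain rule to obtain $\nabla_{\widehat W}L(Z,s)\odot D_{Z,s}$ at $\Delta Z=0$, and use Cauchy--Schwarz for the steepest-descent claim. Your extra remarks are accurate refinements, not a different argument: the self-adjointness of the Hadamard scaling, the uniqueness of the minimizing unit direction (the paper instead minimizes the linear model over a radius-$r$ ball), and the zero entries of $D$.
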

Lemma~\ref{lem:code-local-surrogate} gives $-\nabla_ZL$ a precise but deliberately local interpretation: it is the steepest direction of the active continuous surrogate, not an automatically valid code update. We next connect its score to an actual one-level code move.

\begin{proposition}[One-step consistency]
	\label{prop:code-one-step-consistency}
	Let $\Delta Z\in\{-1,0,1\}^{d_{\mathrm{out}}\times d_{\mathrm{in}}}$ be a feasible one-step code move such that
$
	Z+\Delta Z\in\mathcal Z
$,
$
	\Delta Z\odot\nabla_{\widehat W}L(Z,s)\le 0
$
entrywise.
	Then the surrogate linear term matches the exact first-order deployed-weight change:
	\begin{equation}
		\begin{aligned}
			\langle \nabla_Z L(Z,s),\Delta Z\rangle_F
			&=
			\left\langle
			\nabla_{\widehat W}L(Z,s),
			\widehat W(Z+\Delta Z,s)-\widehat W(Z,s)
			\right\rangle_F .
		\end{aligned}
		\label{eq:one-step-consistency}
	\end{equation}
\end{proposition}
Proposition~\ref{prop:code-one-step-consistency} provides the central discrete connection: by selecting the adjacent gap in the descent direction, the surrogate score exactly matches the first-order deployed-weight change of every feasible gradient-aligned one-level move. Because the network loss is nonlinear, the next result controls the discrepancy from the realized loss.

\begin{corollary}[One-step smooth-loss upper bound]
	\label{prop:code-one-step-upper-bound}
	Under the assumptions of Proposition~\ref{prop:code-one-step-consistency}, if $L$ is $\beta$-smooth as a function of the deployed weight in a neighborhood of $\widehat W(Z,s)$, then
	\[
	L(Z+\Delta Z,s)
	\le
	L(Z,s)
	+
	\langle \nabla_Z L(Z,s),\Delta Z\rangle_F
	+
	\frac{\beta}{2}
	\left\|
	\widehat W(Z+\Delta Z,s)-\widehat W(Z,s)
	\right\|_F^2 .
	\]
\end{corollary}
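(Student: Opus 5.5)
The plan is to apply the standard descent lemma for $\beta$-smooth functions in deployed-weight space, and then use Proposition~\ref{prop:code-one-step-consistency} to rewrite the linear term in code coordinates.

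Write $W_0:=\widehat W(Z,s)$ and $W_1:=\widehat W(Z+\Delta Z,s)$, and regard $L$ as a function of the deployed weight alone, so that $L(Z,s)=L(W_0)$ and $L(Z+\Delta Z,s)=L(W_1)$. This is legitimate because the scale $s$ is held fixed and $L$ depends on $(Z,s)$ only through the recovered weights.

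\textbf{Step 1 (descent lemma).} Assume the segment $\{W_0+t(W_1-W_0):t\in[0,1]\}$ lies in the neighborhood where $\nabla_{\widehat W}L$ is $\beta$-Lipschitz. Then
\[
L(W_1)-L(W_0)-\langle\nabla_{\widehat W}L(W_0),W_1-W_0\rangle_F
=\int_0^1\bigl\langle \nabla_{\widehat W}L(W_0+t(W_1-W_0))-\nabla_{\widehat W}L(W_0),\,W_1-W_0\bigr\rangle_F\,dt .
\]
Applying Cauchy--Schwarz and the Lipschitz bound bounds the right-hand side by $\int_0^1\beta t\|W_1-W_0\|_F^2\,dt=\frac{\beta}{2}\|W_1-W_0\|_F^2$.

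\textbf{Step 2 (rewrite the linear term).} The hypotheses of Proposition~\ref{prop:code-one-step-consistency} hold: $\Delta Z\in\{-1,0,1\}^{d_{\mathrm{out}}\times d_{\mathrm{in}}}$, the move is feasible, and it is gradient-aligned. Identity~\eqref{eq:one-step-consistency} therefore gives
\[
\langle\nabla_{\widehat W}L(W_0),W_1-W_0\rangle_F=\langle\nabla_Z L(Z,s),\Delta Z\rangle_F .
\]
Substituting this into the inequality from Step~1 yields exactly the stated bound.

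The main obstacle is not the algebra but the scope of the smoothness hypothesis. The step $W_1-W_0$ is a finite jump of size $s_g\cdot(\text{codebook gap})$, and it must stay inside the neighborhood on which $\beta$-smoothness is assumed. I would make this explicit by interpreting the phrase ``in a neighborhood of $\widehat W(Z,s)$'' as a convex neighborhood containing $W_1$. This is natural, since the neighborhood is implicitly meant to contain the candidate being bounded; if this cannot be assumed, I would state it as a standing assumption of the corollary. All remaining steps are routine.
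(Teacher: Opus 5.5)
Your proposal is correct and takes the same route as the paper. You apply the descent lemma to $L$ as a function of the deployed weight, then use Proposition~\ref{prop:code-one-step-consistency} to replace $\langle \nabla_{\widehat W}L(Z,s), \widehat W(Z+\Delta Z,s)-\widehat W(Z,s)\rangle_F$ with $\langle \nabla_Z L(Z,s),\Delta Z\rangle_F$. Your requirement that the smoothness neighborhood contain the whole segment from $\widehat W(Z,s)$ to $\widehat W(Z+\Delta Z,s)$ is an assumption the paper uses only implicitly when it invokes the descent lemma, so stating it is a sound clarification rather than a change of argument.
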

Corollary~\ref{prop:code-one-step-upper-bound} bounds the realized loss change by the surrogate prediction plus a curvature penalty; a move is guaranteed to decrease the loss when its negative surrogate term dominates this penalty. Thus the signal is first-order faithful for feasible local moves, while candidate evaluation accounts for nonlinear and higher-order effects. This bound underlies Proposition~\ref{prop:expected-code-descent}; proofs of the present results are provided in Appendix~\ref{app:code-gradient}.

\subsection{Gradient-Guided Discrete Optimization}
\label{sec:guided}
The results in Section~\ref{sec:code-properties} show how to rank feasible local code moves. We now turn this signal into a complete procedure that alternates differentiable scale updates with deployment-faithful code search. The surrogate gradient guides where to search, while the deployed loss determines what to accept.

Performing gradient descent with the code surrogate gradient~\eqref{eq:glat} gives a real-valued reference point in code space:
\begin{equation}
	Z^{\mathrm{ref}} = Z - \eta_Z \nabla_Z L(Z,s),
	\label{eq:guided-reference-z}
\end{equation}
where $\eta_Z>0$ is the step size. Because $Z^{\mathrm{ref}}$ is generally non-integer, it only centers a proposal over valid codes. The \emph{guide--sample--evaluate--select} pipeline draws $M\ll|\mathcal Z|$ candidates with greater mass near $Z^{\mathrm{ref}}$ and retains the lowest-loss state. Appendix~\ref{app:proposal-spec} gives an efficiently sampled coordinate-factorized proposal.

In practice, $\widehat L_{\mathcal B}(Z,s)$ denotes the average loss over mini-batch $\mathcal B$, an unbiased estimator of $L(Z,s)$. Each iteration first updates the differentiable scales by projected gradient descent with $Z$ fixed and then searches the codes at the updated scales. Algorithm~\ref{alg:code-search} summarizes the procedure; $\Pi_{\mathcal S}$ projects onto the admissible scale space, and selection ties retain the current $Z$.

\begin{algorithm}[t]
	\caption{\method: Guide--Sample--Evaluate--Select in Quantized Code Space}
	\label{alg:code-search}
	\begin{algorithmic}[1]
		\STATE \textbf{Input:} state $(Z,s)$; codebook $\mathcal C$; data $\mathcal D$; feasible sets $(\mathcal Z,\mathcal S)$; step sizes $(\eta_Z,\eta_s)$; proposal $p$; candidates $M$; iterations $T$.
		\FOR{$t=1,\ldots,T$}
		\STATE Sample mini-batch $\mathcal B_t\sim\mathcal D$
		\STATE Update scales by $s\leftarrow \Pi_{\mathcal S}(s-\eta_s\nabla_{s} \widehat L_{\mathcal B_t}(Z,s))$
		\STATE \textit{Guide:} compute $\nabla_Z\widehat L_{\mathcal B_t}(Z,s)$ using \eqref{eq:glat}, then $Z^{\mathrm{ref}}$ using \eqref{eq:guided-reference-z}
		\STATE \textit{Sample:} sample $M$ code candidates $\{Z^m\}_{m=1}^M$ from $p(\cdot\mid Z^{\mathrm{ref}})$
		\STATE \textit{Evaluate:} compute $\widehat L_{\mathcal B_t}(Z^m,s)$ for each candidate
		\STATE \textit{Select:} $Z\leftarrow\arg\min_{\widetilde Z\in\{Z,Z^1,\ldots,Z^M\}}\widehat L_{\mathcal B_t}(\widetilde Z,s)$
		\ENDFOR
		\STATE \textbf{Return:} $\widehat W(Z,s)=S(s)\odot\mathcal C(Z)$
	\end{algorithmic}
\end{algorithm}

The next result isolates the role of gradient guidance in the deterministic full-batch code-search substep; it is conditional on the proposal mass placed near the reference step.

\begin{proposition}[Guided best-of-$M$ descent]
	\label{prop:expected-code-descent}
	Fix the scales $s$, let $q_t$ be the proposal mass of gradient-aligned one-step feasible updates $\Delta$ satisfying $\|\Delta+\eta_Z\nabla_ZL(Z_t,s)\|_F\le c\eta_Z\|\nabla_ZL(Z_t,s)\|_F$, and suppose that Corollary~\ref{prop:code-one-step-upper-bound} applies with $\|\widehat W(Z_t+\Delta,s)-\widehat W(Z_t,s)\|_F\le\kappa\|\Delta\|_F$. If $c\in[0,1)$ and $\alpha:=1-c-\frac{\beta\kappa^2\eta_Z}{2}(1+c)^2>0$, then $M$ independent candidates followed by the selection rule in Algorithm~\ref{alg:code-search} satisfy
	$	\mathbb E\!\left[L(Z_{t+1},s)\mid Z_t\right]
	\le
	L(Z_t,s)
	-
	\alpha\eta_Z\bigl[1-(1-q_t)^M\bigr]\|\nabla_ZL(Z_t,s)\|_F^2$.
\end{proposition}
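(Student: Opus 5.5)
The plan is to show that each candidate in the ``good'' set decreases the loss by a fixed amount, and then average over whether the best-of-$M$ draw contains at least one such candidate. Write $g:=\nabla_ZL(Z_t,s)$ and let $\mathcal A_t$ be the set of gradient-aligned one-step feasible updates $\Delta$ with $\|\Delta+\eta_Z g\|_F\le c\eta_Z\|g\|_F$, so that $q_t=p(Z_t+\Delta\in Z_t+\mathcal A_t\mid Z^{\mathrm{ref}})$. The argument has two steps:
\begin{itemize}
\item a deterministic per-candidate bound $L(Z_t+\Delta,s)\le L(Z_t,s)-\alpha\eta_Z\|g\|_F^2$ for every $\Delta\in\mathcal A_t$;
\item a probabilistic step exploiting that selection never increases the loss.
\end{itemize}

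\textbf{Per-candidate bound.} Fix $\Delta\in\mathcal A_t$. Since $\Delta$ is a feasible one-level, gradient-aligned move, Proposition~\ref{prop:code-one-step-consistency} and Corollary~\ref{prop:code-one-step-upper-bound} apply. Combined with the assumed Lipschitz-type control $\|\widehat W(Z_t+\Delta,s)-\widehat W(Z_t,s)\|_F\le\kappa\|\Delta\|_F$, they give
\[
L(Z_t+\Delta,s)\le L(Z_t,s)+\langle g,\Delta\rangle_F+\tfrac{\beta\kappa^2}{2}\|\Delta\|_F^2 .
\]
Write $\Delta=-\eta_Z g+r$ with $\|r\|_F\le c\eta_Z\|g\|_F$. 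The linear term then satisfies, by Cauchy--Schwarz,
\[
\langle g,\Delta\rangle_F=-\eta_Z\|g\|_F^2+\langle g,r\rangle_F\le-(1-c)\eta_Z\|g\|_F^2 .
\]
The triangle inequality gives $\|\Delta\|_F\le(1+c)\eta_Z\|g\|_F$, so the quadratic term is at most $\frac{\beta\kappa^2}{2}(1+c)^2\eta_Z^2\|g\|_F^2$. Summing the two, the decrease is at least
\[
\eta_Z\|g\|_F^2\Bigl[1-c-\tfrac{\beta\kappa^2\eta_Z}{2}(1+c)^2\Bigr]=\alpha\eta_Z\|g\|_F^2 .
\]
This is positive by the hypothesis $\alpha>0$.

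\textbf{Best-of-$M$ step.} We work in the deterministic full-batch setting, so the selection rule evaluates the true $L$. Because the current $Z_t$ is always among the options, $L(Z_{t+1},s)\le L(Z_t,s)$ on every outcome. Let $E$ be the event that at least one of the $M$ independent candidates has $Z^m-Z_t\in\mathcal A_t$. By independence,
\[
\Pr(E\mid Z_t)=1-(1-q_t)^M .
\]
On $E$, the argmin selection gives $L(Z_{t+1},s)\le L(Z^m,s)\le L(Z_t,s)-\alpha\eta_Z\|g\|_F^2$, and on $E^c$ we use the trivial bound. Taking conditional expectation and splitting over $E$ and $E^c$ yields exactly the stated inequality.

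\textbf{Main obstacle.} The main obstacle is bookkeeping rather than depth. One must ensure that ``gradient-aligned one-step feasible'' carries exactly the hypotheses $\Delta\in\{-1,0,1\}^{d_{\mathrm{out}}\times d_{\mathrm{in}}}$, $Z_t+\Delta\in\mathcal Z$, and $\Delta\odot\nabla_{\widehat W}L\le0$ needed for Proposition~\ref{prop:code-one-step-consistency}. One must also ensure that the $\beta$-smoothness neighborhood in Corollary~\ref{prop:code-one-step-upper-bound} contains all the (finitely many) good candidates; I would state this as part of the assumption that the corollary ``applies''. I would also note that the loss evaluated in the selection step must be the full-batch $L$, not $\widehat L_{\mathcal B_t}$; otherwise the monotonicity on $E^c$ fails.
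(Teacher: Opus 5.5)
Your proposal is correct and follows the paper's proof essentially step for step. Both arguments use the same per-candidate bound, obtained from $\langle g,\Delta\rangle_F\le -(1-c)\eta_Z\|g\|_F^2$ and $\|\Delta\|_F\le(1+c)\eta_Z\|g\|_F$ fed into Corollary~\ref{prop:code-one-step-upper-bound}, and the same event argument with $\Pr(E\mid Z_t)=1-(1-q_t)^M$ plus monotonicity of the selection rule that always includes $Z_t$. Your caveats (full-batch selection, and the smoothness neighborhood covering all good candidates) correspond to the paper's restriction to the deterministic full-batch substep and its Assumption~A1.
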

The factor $1-(1-q_t)^M$ is the probability that at least one candidate lies in the specified near-reference set, showing how proposal concentration and candidate count control expected progress. The proof is provided in Appendix~\ref{app:expected-descent}.

\paragraph{Operational distinction.}
Continuous or STE-style approaches use a gradient to produce an update that is retained, projected, or requantized, whereas zeroth-order and evolutionary approaches preserve discrete evaluation but rely on sampled losses to determine where to search. \method{} separates these roles: a quantization-aware backward signal shapes a proposal over valid codes, while realized discrete losses select the accepted deployable state. Section~\ref{sec:experiments} evaluates the two practical consequences of this separation: deployment-faithful performance and more effective candidate search.

\section{Experiments}
\label{sec:experiments}

We evaluate \method{} to address three questions: whether \method{} improves performance while remaining in the fully 4-bit deployment space; whether the results are robust across data regimes, code parameterizations, and quantization datatypes; and whether gradient guidance improves candidate quality and end-to-end search efficiency.

\subsection{Experimental Settings}

We evaluate Qwen3-0.6B, Llama-3.2-1B and 3B-Instruct on GSM8K, AlpacaEval, and MASSIVE en-US, covering arithmetic reasoning, instruction following, and structured semantic parsing. We quantize all transformer-block linear layers while leaving the token embedding and LM head unquantized. Under matched data and evaluation protocols, we compare unquantized references (Base-16 and 16-bit SFT), mixed-precision adapter methods (QLoRA-16A and QA-LoRA), and fully 4-bit baselines (Base-4, QLoRA-4Merge, LoQT, L4Q, PV-Tuning, QuZO, QZO, and QES). We evaluate both full-matrix and low-rank code parameterizations of \method{}; complete dataset, metric, quantization, and baseline details are provided in Appendix~\ref{app:experimental-details}. All results are obtained with three independet runs with different random seeds, reported in $\text{mean}_\text{std}$.

\subsection{Main Results}
Tables~\ref{tab:main_gsm8k_alpaca} and~\ref{tab:main_massive} report results across arithmetic reasoning, instruction following, and structured semantic parsing. Across both backbones and all three tasks, \method{} is the strongest fully 4-bit method. Direct 4-bit quantization causes a substantial drop, and the competing fully 4-bit methods recover this loss to varying degrees; \method{} provides the most consistent recovery across the reported metrics.
We skip relatively weak baselines on 3B model due to computation budget.

The comparison between QLoRA-16A and QLoRA-4Merge isolates the deployment-state mismatch. QLoRA-16A retains a high-precision adapter at inference time, whereas QLoRA-4Merge collapses that adapter into the backbone and requantizes the resulting weights. The degradation after this conversion, also highlighted in Figure~\ref{fig:performance}, supports deployment-faithful optimizing rather than relying on an optimize-then-quantize pipeline. Within the fully 4-bit regime, both \method{} parameterizations are effective: the LoRA variant is generally strongest, while the full-matrix variant remains competitive. We examine their dependence on training-set size in Section~\ref{sec:size}.

Compared with 16-bit SFT, the best \method{} variant obtains higher GSM8K accuracy and stronger MASSIVE results, while the 16-bit reference remains strongest on Qwen AlpacaEval and is marginally higher on Llama AlpacaEval. We interpret the gains with low-bit codes as regularization in Section~\ref{sec:size}, not as a general claim that lower precision is universally superior.

\paragraph{Deployment cost view.}
Figure~\ref{fig:performance} complements the task tables by plotting Llama-3.2-1B-Instruct GSM8K accuracy across deployment regimes. The comparison is computed on the target quantized modules and excludes modules intentionally left unquantized. \method{} attains the best accuracy on the fully 4-bit deployment line, whereas mixed-precision methods retain additional high-precision adapter parameters.

\begin{table}[ht]
	\centering
	\small
		\caption{Results on GSM8K and AlpacaEval. Values are means over three independent runs, with standard deviations shown as subscripts. The second column reports the final deployment regime: 16 denotes unquantized 16-bit deployment, Mixed denotes a 4-bit backbone with high-precision adapter parameters, and 4 denotes fully 4-bit deployment for the quantized modules.}
	\label{tab:main_gsm8k_alpaca}
\resizebox{\linewidth}{!}{%
\begin{tabular}{lccccccc}
	\toprule
	\multirow{2}{*}{\textbf{Method}}
	& \multirow{2}{*}{\makecell{\textbf{Deployment}\\\textbf{bits}}}
	& \multicolumn{2}{c}{\textbf{Qwen3-0.6B}}
	& \multicolumn{2}{c}{\textbf{Llama-3.2-1B-Inst.}} 
	& \multicolumn{2}{c}{\textbf{Llama-3.2-3B-Inst.}}\\
	\cmidrule(lr){3-4}
	\cmidrule(lr){5-6}
	\cmidrule(lr){7-8}
	&
	& \makecell{\textbf{GSM8K}\\\textbf{Acc.}}
	& \makecell{\textbf{Alpaca}\\\textbf{WR}}
	& \makecell{\textbf{GSM8K}\\\textbf{Acc.}}
	& \makecell{\textbf{Alpaca}\\\textbf{WR}} 
	& \makecell{\textbf{GSM8K}\\\textbf{Acc.}}
	& \makecell{\textbf{Alpaca}\\\textbf{WR}}\\
	\midrule
	Base-16        & 16       & \stat{37.45}{0.00} & \textit{base}& \stat{36.32}{0.00} &\textit{base}& \stat{56.94}{0.00} &\textit{base} \\
	16-bit SFT    & 16        & \stat{39.65}{0.54} & \beststat{88.26}{2.16} & \stat{38.93}{0.63} & \beststat{56.64}{2.36} & \beststat{67.10}{0.28} & \beststat{78.57}{2.86} \\
	\midrule
	QLoRA-16A      & Mixed (4.08)     & \stat{39.57}{0.47} & \stat{54.90}{2.87} & \stat{38.21}{0.51} & \stat{53.89}{2.74} & \stat{65.05}{0.39} & \stat{68.26}{3.12} \\
	QA-LoRA        & Mixed (4.05)    & \stat{36.13}{0.62} & \stat{53.34}{3.12} & \stat{34.15}{0.74} & \stat{49.23}{3.18} & - & - \\
	\midrule\midrule
	Base-4         & \textbf4     & \stat{27.14}{0.00} & \stat{26.58}{1.94} & \stat{25.85}{0.00} & \stat{46.16}{2.42} & \stat{49.13}{0.00} & \stat{35.94}{2.68} \\
	QLoRA-4Merge       & \textbf4   & \stat{26.46}{0.71} & \stat{38.35}{2.76} & \stat{24.79}{0.66} & \stat{33.68}{3.55} & \stat{49.13}{0.52} & \stat{36.20}{3.07} \\
	LoQT           & \textbf4     & \stat{30.94}{0.58} & \stat{44.67}{2.63} & \stat{29.04}{0.61} & \stat{48.85}{2.81} & - & - \\
	L4Q            & \textbf4     & \stat{29.89}{0.67} & \stat{40.28}{2.91} & \stat{31.06}{0.57} & \stat{51.21}{2.66} & - & - \\
	PV-Tuning      & \textbf4 & \stat{38.70}{0.49} & \stat{70.51}{3.21} & \stat{36.92}{0.45} & \stat{52.63}{1.84}& \stat{63.08}{0.43} & \stat{62.36}{2.79}  \\
	QuZO           & \textbf4     & \stat{29.45}{0.84} & \stat{34.35}{3.47} & \stat{26.54}{0.78} & \stat{41.57}{3.26} & - & -  \\
	QZO            & \textbf4     & \stat{29.76}{0.76} & \stat{38.99}{3.14} & \stat{27.60}{0.81} & \stat{39.39}{3.41} & - & -  \\
	QES            & \textbf4     & \stat{28.80}{0.92} & \stat{41.13}{3.36} & \stat{26.39}{0.85} & \stat{38.12}{3.67} & - & -  \\ \midrule
	\textbf{\method{}} (LoRA)   & \textbf4     & \beststat{45.72}{0.42} & \stat{82.24}{2.57} & \secondstat{40.52}{0.47} & \stat{54.66}{1.83}& \secondstat{67.08}{0.31} & \stat{69.52}{3.26} \\
		\textbf{\method{}} (Full)       & \textbf4     & \secondstat{43.68}{0.38} & \secondstat{83.09}{2.48} & \beststat{41.63}{0.58} & \secondstat{56.34}{2.31} & \stat{66.39}{0.35} & \secondstat{72.04}{3.14}\\
	\bottomrule
\end{tabular}
}
\end{table}

\paragraph{Structured prediction.}

Table~\ref{tab:main_massive} evaluates a more application-like structured prediction setting. \method{} gives the best fully 4-bit results on both backbones and all three metrics. Exact Match requires the entire semantic frame to be correct, while Slot F1 measures the quality of the predicted slot structure. The gains on both metrics indicate that deployment-faithful code updates preserve structured output behavior after quantization.

\begin{table}[ht]
	\centering
	\setlength\tabcolsep{3pt}
	\small
		\caption{MASSIVE en-US test results. Values are means over three independent runs, with standard deviations shown as subscripts. Models are fine-tuned on the English-US training split and evaluated on the English-US test split. EM denotes exact semantic-frame match.}
	\label{tab:main_massive}
\resizebox{\linewidth}{!}{%
\begin{tabular}{lcccccccccc}
	\toprule
	\multirow{2}{*}{\textbf{Method}}
	& \multirow{2}{*}{\makecell{\textbf{Deployment}\\\textbf{bits}}}
	& \multicolumn{3}{c}{\textbf{Qwen3-0.6B}}
	& \multicolumn{3}{c}{\textbf{Llama-3.2-1B-Inst.}} 
	& \multicolumn{3}{c}{\textbf{Llama-3.2-3B-Inst.}} \\
	\cmidrule(lr){3-5}
	\cmidrule(lr){6-8}
	\cmidrule(lr){9-11}
	&
	& \textbf{EM} & \textbf{Intent} & \textbf{Slot F1}
	& \textbf{EM} & \textbf{Intent} & \textbf{Slot F1} 
	& \textbf{EM} & \textbf{Intent} & \textbf{Slot F1}\\
	\midrule
	Base-16        & 16      & \stat{0.00}{0.00} & \stat{0.00}{0.00} & \stat{3.38}{0.00}& \stat{0.00}{0.00} & \stat{0.13}{0.00} & \stat{1.39}{0.00} & \stat{0.07}{0.00} & \stat{3.03}{0.00} & \stat{3.63}{0.00}  \\
	16-bit SFT         & 16       & \stat{57.92}{0.36}& \stat{83.95}{0.22}& \stat{65.31}{0.44}& \stat{63.06}{0.41}& \stat{85.57}{0.29}& \stat{73.39}{0.38} & \stat{66.16}{0.33}& \secondstat{88.97}{0.24}& \stat{75.21}{0.35} \\
	\midrule
	QLoRA-16A      & Mixed (4.08)    & \stat{57.16}{0.42}& \stat{83.49}{0.31}& \stat{66.74}{0.48} & \stat{63.39}{0.37} & \stat{85.81}{0.26} & \stat{74.13}{0.43} & \stat{65.23}{0.39}& \stat{88.06}{0.28} & \stat{75.14}{0.41}  \\
	QA-LoRA          & Mixed (4.05) & \stat{50.17}{0.58}& \stat{80.77}{0.47} & \stat{61.19}{0.62} & \stat{54.10}{0.55} & \stat{84.30}{0.35} & \stat{65.41}{0.66} & - & - &- \\
	\midrule\midrule
	Base-4        & \textbf4      & \stat{0.00}{0.00} & \stat{0.61}{0.00} & \stat{3.97}{0.00} & \stat{0.00}{0.00} & \stat{0.03}{0.00} & \stat{0.62}{0.00}& \stat{0.00}{0.00} & \stat{1.95}{0.00} & \stat{3.32}{0.00} \\
	QLoRA-4Merge          & \textbf4     & \stat{0.00}{0.00} & \stat{0.67}{0.11} & \stat{3.61}{0.18} & \stat{0.00}{0.00} & \stat{0.03}{0.02} & \stat{0.48}{0.14} & \stat{0.00}{0.00} & \stat{1.95}{0.13} & \stat{3.29}{0.21}\\
	LoQT           & \textbf4    & \stat{51.42}{0.63}& \stat{72.09}{0.51}& \stat{45.83}{0.78}& \stat{52.14}{0.59} & \stat{75.27}{0.46} & \stat{49.90}{0.82}& - & - &- \\
	L4Q            & \textbf4     & \stat{48.27}{0.71} & \stat{72.03}{0.56} & \stat{46.02}{0.74} & \stat{53.43}{0.64} & \stat{75.88}{0.49} & \stat{58.62}{0.69} & - & - &-\\
	PV-Tuning      & \textbf4     & \stat{58.71}{0.45} & \stat{82.92}{0.37} & \stat{66.29}{0.53} & \stat{64.50}{0.38} & \stat{86.31}{0.31} & \stat{74.88}{0.46} & \stat{65.89}{0.35} & \stat{86.95}{0.27} & \stat{76.20}{0.42}  \\
	QuZO           & \textbf4     & \stat{31.26}{0.86} & \stat{57.30}{0.72} & \stat{38.55}{0.91} & \stat{46.35}{0.77} & \stat{64.17}{0.68} & \stat{46.80}{0.88} & - & - &- \\
	QZO            & \textbf4     & \stat{30.06}{0.79} & \stat{59.79}{0.64} & \stat{42.50}{0.83} & \stat{45.18}{0.81} & \stat{62.50}{0.74} & \stat{41.33}{0.95} & - & - &-\\
	QES            & \textbf4     & \stat{29.25}{0.93} & \stat{52.53}{0.81} & \stat{34.41}{1.02} & \stat{43.56}{0.85} & \stat{58.35}{0.79} & \stat{39.81}{0.97}  & - & - &-\\ \midrule
	\textbf{\method{}}  (LoRA)      & \textbf4     & \secondstat{65.88}{0.34} & \secondstat{85.53}{0.28} & \beststat{76.60}{0.39} & \secondstat{67.37}{0.31} & \stat{86.89}{0.33} & \secondstat{76.25}{0.37} &\beststat{70.48}{0.27} & \beststat{90.97}{0.19} & \beststat{79.85}{0.31} \\
	\textbf{\method{}} (Full)      & \textbf4     & \beststat{66.61}{0.32} & \beststat{86.85}{0.24} & \secondstat{75.88}{0.41} & \beststat{69.17}{0.28} & \beststat{88.13}{0.21} & \beststat{78.74}{0.34}& \secondstat{67.26}{0.29} & \stat{87.53}{0.25} & \secondstat{78.24}{0.36}  \\
	\bottomrule
\end{tabular}
}
\end{table}

\subsection{Robustness across Data, Parameterizations, and Datatypes}

\paragraph{Data size and code parameterization.}\label{sec:size}
The comparison with 16-bit SFT suggests that the restricted code-space parameterization may provide useful regularization when fine-tuning data are limited. We treat this as an empirical hypothesis rather than a consequence of the local optimization theory. Table~\ref{tab:datasize} examines the hypothesis by comparing the original 7.0k-example GSM8K training set with a 167k-example training set formed by adding 160k MetaMathQA examples~\citep{yu2024metamath}.
                                                                                                                                                                
\begin{table}[ht]
	\centering
	\small
		\caption{GSM8K test accuracy (\%) across training-set sizes and parameterizations on Llama-3.2-1B-Instruct. Values are means over three independent runs, with standard deviations shown as subscripts.}
	\label{tab:datasize}
	\begin{tabular}{llcc}
		\toprule
		\textbf{Method}
		& \textbf{Parameterization}
		& \textbf{7.0k examples}
		& \textbf{167k examples}\\
		\midrule
		16-bit SFT&LoRA (rank 8)
		& \secondstat{38.9}{0.6}& \beststat{55.9}{0.7} \\
		\midrule
		\multirow{2}{*}{\method}
		&LoRA (rank 8)
		&\stat{40.5}{0.6}& \stat{54.3}{0.6} \\
		&Full matrix
		&  \beststat{41.6}{0.5}& \secondstat{54.5}{0.5}  \\
		
		\bottomrule
	\end{tabular}

\end{table}

With 7.0k examples, \method{} (LoRA) performs best; with 167k examples, 16-bit SFT becomes strongest. This result is consistent with, but does not prove, a regularization benefit from the low-rank code parameterization in the smaller-data regime. 

\paragraph{Quantization datatype.}

Based on the unified datatype-specific recovery map in \eqref{eq:recover}, \method{} does not rely on a single quantization datatype. Different datatypes change the codebook $\mathcal C$ and scale grouping, while the search rule requires only ordered code levels and the local effective-step map in Definition~1. Table~\ref{tab:datatype} shows that \method{} can be applied to NF4, INT4, and MXFP4. NF4 and MXFP4 are stronger than INT4 in this setting, and all three variants remain competitive with the adaptive PV-Tuning baseline. These results establish compatibility across the tested datatype-specific geometries without attributing the performance differences to any single codebook property.

\begin{table}[ht]
	\centering
	\small
		\caption{Results of \method{} across quantization datatypes on Llama-3.2-1B-Instruct. Values are means over three independent runs, with standard deviations shown as subscripts.}
	\label{tab:datatype}
	\begin{tabular}{llccc}
		\toprule
		\textbf{Method}
		& \textbf{Datatype}
		& \textbf{GSM8K ACC.}
		& \textbf{AlpacaEval WR.}
		& \textbf{MASSIVE EM.}\\
		\midrule
		\multirow{3}{*}{\method}
		& NF4
		& \beststat{41.6}{0.5}& \stat{56.3}{1.8}& \beststat{69.2}{0.3} \\
		&INT4
		& \stat{38.9}{0.6} & \stat{52.0}{2.4} & \stat{65.1}{0.5}  \\
		&MXFP4
		& \stat{41.1}{0.4} & \beststat{57.1}{2.2} & \stat{68.8}{0.4} \\
		\midrule
		PV-Tuning&Adaptive
		& \stat{36.9}{0.5} & \stat{52.6}{1.8} & \stat{64.5}{0.4}  \\
		\bottomrule
	\end{tabular}

\end{table}

\subsection{Search Efficiency and Component Analysis}\label{sec:efficiency}

\paragraph{End-to-end efficiency.}
Figure~\ref{fig:curve} compares wall-clock time against the best test accuracy reached so far. \method{} is not the cheapest possible step: it uses one backward pass to build the guided reference and then evaluates multiple discrete candidates. The benefit is that these evaluations are targeted. The baselines improve quickly but plateau below the final \method{} accuracy, while \method{} reaches a stronger result within the displayed time range. This supports the intended trade-off of spending one gradient computation to reduce wasted discrete search.

\begin{figure}[ht]
	\vspace{-5pt}
	\centering
	\subfigure[Wall-clock time versus the best test accuracy reached so far, evaluated once per epoch.\label{fig:curve}]{
		\includegraphics[width=0.48\textwidth]{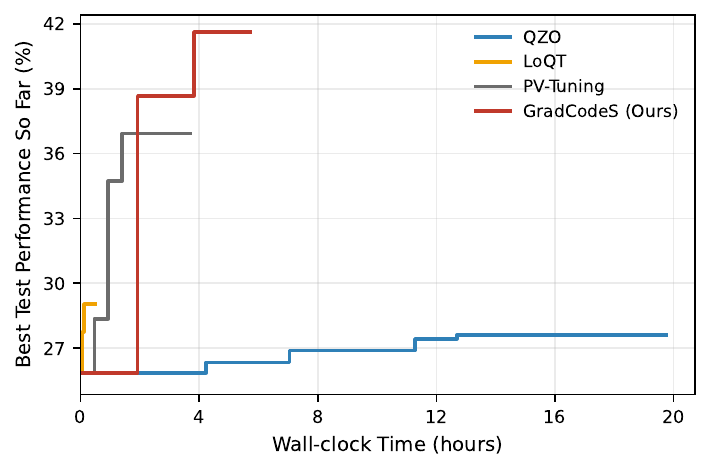}}
	\hfill
	\subfigure[Candidate loss versus distance from the guided reference in one sampling step.\label{fig:distrib}]{
		\includegraphics[width=0.48\textwidth]{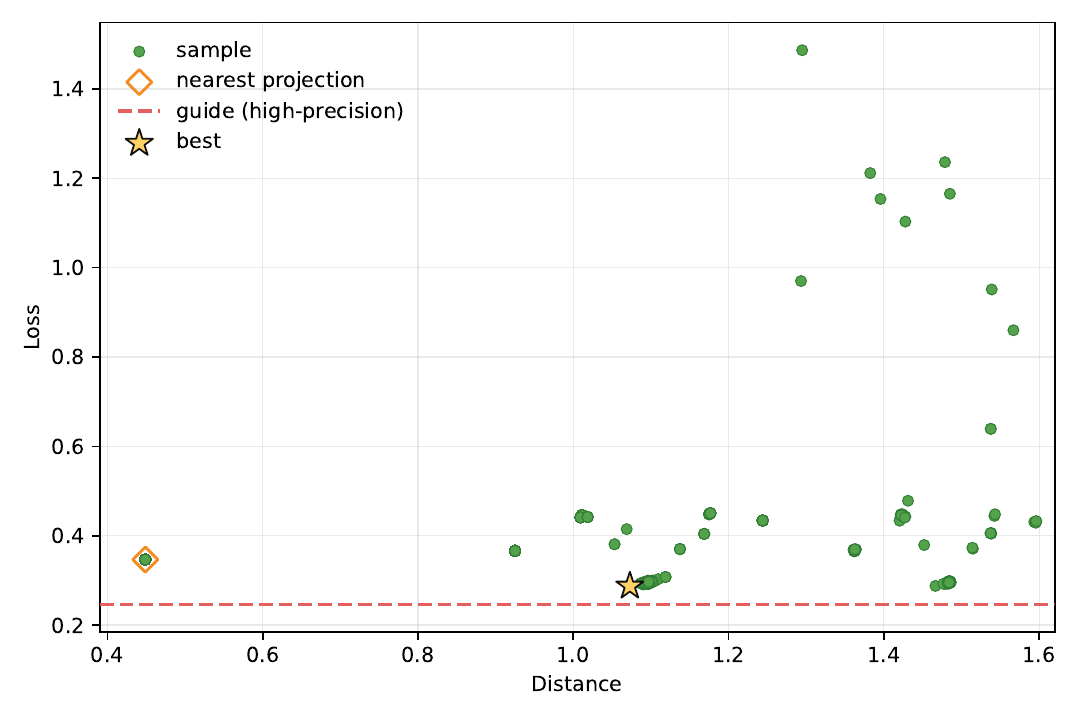}}
	\caption{Training efficiency and candidate-set quality for Llama-3.2-1B-Instruct on GSM8K.}
\end{figure}
\FloatBarrier

\paragraph{Candidate-set quality.}
Figure~\ref{fig:distrib} gives a candidate-level view of one illustrative sampling step. Candidate loss tends to be lower near the guided reference, but the relationship is imperfect. This is why \method{} does not simply project to the nearest code point: it keeps several nearby candidates reachable and lets their realized discrete losses determine the accepted update.

\begin{wraptable}{r}{0.56\textwidth}
	\vspace{-8pt}
	\centering
	\small
	\caption{Candidate-selection statistics for different sampling distributions under the same candidate budget. Values are means over three independent runs, with standard deviations shown as subscripts.}
	\label{tab:distrib}
	\begin{tabular}{lcc}
		\toprule
		\textbf{Sampling}
		& \textbf{Sel. Prob.}
		& \textbf{Sel. Mean Loss}\\
		\midrule
		Gaussian& {5.83}\%& {0.2955}  \\
		1-Hop Neighbor&{38.16}\%& {0.2963}  \\
		$\nabla_{\hat{W}} L$-Guided& {48.62}\%& {0.2951}\\
		$\nabla_{Z} L$-Guided (Ours)& {67.97}\%& {0.2949}\\
		\bottomrule
	\end{tabular}
\end{wraptable}

Table~\ref{tab:distrib} reports the empirical selection probability and selected mean loss under the same candidate budget. The guided sampler is selected substantially more often than Gaussian or one-hop sampling, indicating that gradient guidance improves candidate-set quality rather than merely increasing the number of forward evaluations.
And the proposed code surrogate gradient ($\nabla_{Z} L$)-guided sample better candidates than mapped ordinary weight gradient ($\nabla_{\hat{W}}  L$)-guided, indicating it is a more effective first order signal.
 Together, Figure~\ref{fig:distrib} and Table~\ref{tab:distrib} are consistent with the proposal-guidance role analyzed in Proposition~\ref{prop:expected-code-descent}, without directly testing its candidate-count dependence.

\paragraph{Component ablations.}
We ablate three components of the default \method{} configuration: gradient guidance, code-index updates, and scale updates. The \textit{w/o guide} variant samples uniformly from a neighborhood of the current code without conditioning on \eqref{eq:guided-reference-z}.
The $\nabla_{\hat{W}}  L$-\textit{guide} replace code surrogate gradient with mapped ordinary weight gradient.
 The \textit{w/o scale update} variant freezes group-wise scales and optimizes only codes, while the \textit{w/o code update} variant freezes the indices and optimizes only group-wise scales.

\begin{figure}[ht]
	\vspace{-5pt}
	\centering
	\includegraphics[width=0.80\textwidth]{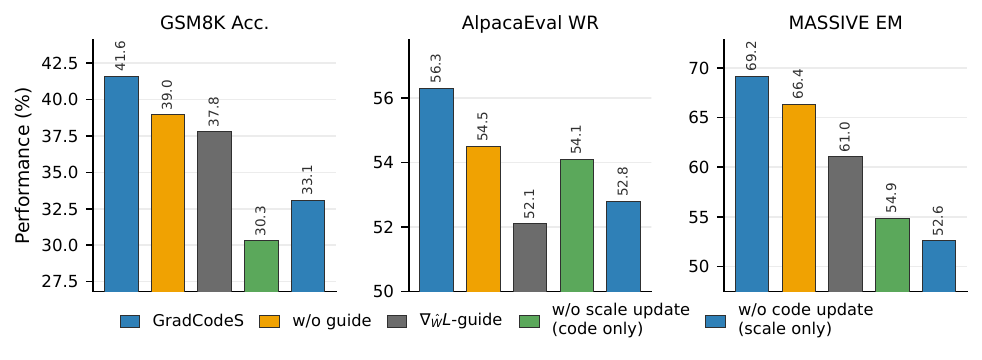}
	\caption{Component ablations for \method{} (LoRA) on Llama-3.2-1B-Instruct.}
	\label{fig:ablation}
\end{figure}

Figure~\ref{fig:ablation} shows that all three components contribute. Both removing gradient guidance and replacing $\nabla_Z L$ signal with $\nabla_{\hat{W}}  L$ consistently reduce performance, supporting the use of the code surrogate gradient to shape the proposal. Freezing either codes or scales causes larger task-dependent drops, indicating that both variables are important to the complete optimization procedure.

\section{Conclusion}

We presented \method, a gradient-guided code-search method for fine-tuning low-bit model. The key idea is to analyze quantization geometry to find the steepest descent direction in code space, and use such first-order information to shape a candidate-sampling distribution, while letting the realized discrete loss decide which state to select to be deployment-faithful. 
Across reasoning, instruction-following, and structured language-understanding tasks, \method{} consistently improves fully 4-bit adaptation and remains compatible with different codebooks, low-rank code parameterizations, and alternating scale updates. The local analysis further clarifies why guided best-of-$M$ code search can dominate both blind discrete search and single-point projected updates. An important next step is to scale the method to larger backbones and richer structured candidate-sampling families while preserving the same deployment-faithful update principle.
\clearpage


\bibliographystyle{iclr2027_conference}
\bibliography{vf}

\clearpage
\appendix

\section{Experimental Details}
\label{app:experimental-details}

\subsection{Models and Evaluation Tasks}

\paragraph{Models.}
We evaluate three instruction-tuned decoder-only language models, Qwen3-0.6B and Llama-3.2-1B and 3B-Instruct. Compact backbones are a useful stress test for fully low-bit adaptation: they are realistic for mobile, edge, and other memory-constrained deployment scenarios, and they make it easier to attribute performance differences to the optimization method rather than to very large overparameterized capacity. For each backbone, we compare the original 16-bit checkpoint, the directly quantized 4-bit checkpoint, mixed-precision adapter baselines, and fully 4-bit fine-tuned variants under the same data and evaluation protocol.

\paragraph{Datasets and metrics.}
We evaluate on three datasets covering arithmetic reasoning, general instruction following, and mobile-style language understanding. For \textbf{GSM8K}, models are fine-tuned on the official training split and evaluated on the official test split; we report answer accuracy after extracting the final numerical answer. For \textbf{Alpaca}, models are fine-tuned on Alpaca Cleaned and evaluated with AlpacaEval; we report win rate under the same evaluator for all methods. For \textbf{MASSIVE en-US}, models are fine-tuned on the English-US training split and evaluated on the English-US test split. We formulate MASSIVE as structured semantic parsing and report Exact Match, Intent Accuracy, and Slot F1.

\subsection{Quantization Protocol}

Following the block-wise setup in Section~\ref{sec:preliminaries}, we quantize all linear parameters in the transformer blocks, including the attention and MLP projection matrices in every layer, while leaving the token embedding and LM head unquantized, following common LLM quantization practice~\citep{dettmers2023qlora,frantar2023gptq,lin2024awq}. We use the same target modules, quantization format, training data, and evaluation scripts whenever applicable. The reported 4-bit scores reflect the actually deployable checkpoint rather than the relaxed training state.

\subsection{Baselines}

We compare with three groups of baselines. A method is considered \textbf{fully low-bit} deployable only if inference does not require high-precision adapters or residual branches for the quantized modules. A method is considered \textbf{mixed} if it uses a quantized backbone but keeps high-precision inference-time adapter parameters. We also include \textbf{unquantized} references to separate adaptation quality from deployment cost.

\textit{Unquantized references.}
\begin{itemize}[leftmargin=*,nosep]
	\item \textbf{Base-16} evaluates the original 16-bit checkpoint without task fine-tuning.
	\item \textbf{16-bit SFT} fine-tunes 16-bit LoRA adapters on the 16-bit backbone and serves as a high-precision parameter-efficient adaptation reference~\citep{hu2022lora}.
\end{itemize}

\textit{Mixed-precision baselines.}
\begin{itemize}[leftmargin=*,nosep]
	\item \textbf{QLoRA-16A} freezes a 4-bit backbone and fine-tunes a 16-bit LoRA adapter, giving a strong low-memory training baseline whose deployed model still contains high-precision adapter weights~\citep{dettmers2023qlora}.
	\item \textbf{QA-LoRA} uses quantization-aware group-wise low-rank adapters to improve compatibility between LoRA adaptation and quantized deployment, but optimizes continuous adapter variables~\citep{xu2023qalora}.
\end{itemize}

\textit{Fully 4-bit baselines.}
\begin{itemize}[leftmargin=*,nosep]
	\item \textbf{Base-4} directly evaluates the quantized 4-bit checkpoint before adaptation.
	\item \textbf{QLoRA-4Merge} follows QLoRA training, then merges the learned adapter into the backbone and requantizes the merged weights to 4-bit for inference~\citep{dettmers2023qlora}.
	\item \textbf{LoQT} trains low-rank adapters for quantized weights and periodically merges them into quantized full-rank weights; we evaluate its final merged-and-requantized checkpoint~\citep{loeschcke2024loqt}.
	\item \textbf{L4Q} is a representative projection/STE-style low-bit fine-tuning baseline~\citep{jeon2025l4q}.
	\item \textbf{PV-Tuning} optimizes a compressed representation and alternates quantized variables with auxiliary continuous quantities~\citep{malinovskii2024pvtuning}.
	\item \textbf{QuZO}, \textbf{QZO}, and \textbf{QES} represent deployment-faithful discrete or zeroth-order low-bit optimization baselines~\citep{zhou2025quzo,shang2025qzo,xu2026qes}.
\end{itemize}

Our method, \textbf{\method}, searches over low-bit code states directly. Every selected update is a deployable low-bit state, so the final model does not require a 16-bit adapter or high-precision residual branch at inference time. We evaluate both the full-matrix code update, \method{} (Full), and the low-rank code parameterization from Appendix~\ref{sec:lora-code}, \method{} (LoRA).

\section{Datatype-Specific Instantiations of the Unified Quantization Model}
\label{app:datatype}

The common update rule depends on each datatype only through its ordered codebook, scale grouping, and recovery map. Table~\ref{tab:datatype-details} summarizes these components for the evaluated datatypes.

\begin{table}[ht]
	\centering
	\small
	\renewcommand{\arraystretch}{1.15}
	\begin{tabular}{lcccc}
		\toprule
		Datatype
		& $b_q$
		& $K_q$
		& Codebook
		& Scale/group structure \\
		\midrule
		NF4
		& 4
		& 16
		& Non-uniform
		& Group-wise scaling \\
		INT4
		& 4
		& 16
		& Uniform
		& Group-wise scaling \\
		MXFP4
		& 4
		& 16
		& E2M1-style
		& Block/microscaling \\
		\bottomrule
	\end{tabular}
	\caption{Datatype-specific components used by the unified quantization representation.}
	\label{tab:datatype-details}
\end{table}

For datatype $q$, let $\mathcal C_q=\{c^{(q)}_1,\ldots,c^{(q)}_{K_q}\}$, let $g_q(i,j)$ denote the scale group of entry $(i,j)$, and let $S_q(s)=[s_{g_q(i,j)}]_{ij}$. The deployed weight is
\begin{equation}
	\widehat W_q(Z,s)=S_q(s)\odot\mathcal C_q(Z),
	\label{eq:datatype-recover}
\end{equation}
with datatype-specific code and scale constraints. A feasible code move changes an entry by
\begin{equation*}
	\Delta\widehat W_{q,ij}
	=s_{g_q(i,j)}\!\left(c^{(q)}_{Z_{ij}+\Delta Z_{ij}}-c^{(q)}_{Z_{ij}}\right).
\end{equation*}
Thus the effective step automatically incorporates the datatype-specific codebook gap and scale. Here \emph{datatype-agnostic} refers to the common optimization rule, not to identical feasible spaces; every candidate is still evaluated through the appropriate recovery map in \eqref{eq:datatype-recover}.

\section{Derivation of the Code Surrogate Gradient}
\label{app:code-gradient-derivation}

The chain rule would suggest
$\nabla_ZL=\nabla_{\widehat W}L\odot\nabla_Z\widehat W$, but the codebook lookup has no standard derivative. For an actual code-coordinate move $\Delta Z\in\mathbb Z^{d_{\mathrm{out}}\times d_{\mathrm{in}}}$, the deployed-weight change is exactly
\[
\widehat W(\mathrm{clip}(Z+\Delta Z,1,K),s)-\widehat W(Z,s)
=
S(s)\odot\Bigl(\mathcal C(\mathrm{clip}(Z+\Delta Z,1,K))-\mathcal C(Z)\Bigr).
\]
We therefore extend $\mathcal C(\cdot)$ piecewise linearly around the current code, using the adjacent gaps $G^+(Z)$ and $G^-(Z)$ from Section~\ref{sec:code-gradient} as directional slopes. For $\|\Delta Z\|_\infty\le 1$, this continuation satisfies
\[
\widehat W(\mathrm{clip}(Z+\Delta Z,1,K),s)-\widehat W(Z,s)
=
S(s)\odot\Bigl(G^+(Z)\odot(\Delta Z)_+
-G^-(Z)\odot(\Delta Z)_-\Bigr).
\]
For a descent-aligned move, a negative weight-gradient entry calls for a forward code move, whereas a positive entry calls for a backward code move. Writing $\sigma=\operatorname{sign}(\nabla_{\widehat W}L)$, the corresponding nonnegative directional slope is
\[
S(s)\odot\Bigl(G^+(Z)\odot\sigma_-+G^-(Z)\odot\sigma_+\Bigr),
\]
which is precisely the effective step $D\bigl(Z,s,\nabla_{\widehat W}L\bigr)$ in Definition~1. Applying the chain rule to this local continuation yields
\[
\nabla_ZL(Z,s)
=
\nabla_{\widehat W}L(Z,s)
\odot
D\bigl(Z,s,\nabla_{\widehat W}L(Z,s)\bigr),
\]
as stated in \eqref{eq:glat}. For a uniform codebook and interior coordinates, the adjacent gaps coincide, so $D$ reduces to the quantization gap multiplied by $S(s)$.

\section{Proofs for Section~\ref{sec:code-properties}}
\label{app:code-gradient}

\paragraph{Restatement of Lemma~\ref{lem:code-local-surrogate}.}
Define the local directional surrogate function of $L(Z,s)$ by extending it to real-valued interpolation $\Delta Z\in\mathbb R^{d_{\mathrm{out}}\times d_{\mathrm{in}}}$ by
\[
\bar{L}_{Z,s}(\Delta Z)
=
L\!\left(
\widehat W(Z,s)
+
D\bigl(Z,s,\nabla_{\widehat W}L(Z,s)\bigr)\odot\Delta Z
\right).
\]
Then
\[
\left.
\nabla_{\Delta Z}
\bar{L}_{Z,s}(\Delta Z)
\right|_{\Delta Z=0}
=
\nabla_Z L(Z,s).
\]
Consequently, whenever $\nabla_Z L(Z,s)\neq 0$, the direction $-\nabla_Z L(Z,s)$ is the steepest descent direction of $\bar{L}_{Z,s}$ at $\Delta Z=0$ under the Frobenius norm.

\begin{proof}[Proof of Lemma~\ref{lem:code-local-surrogate}]
	By definition,
	\[
	\bar{L}_{Z,s}(\Delta Z)
	=
	L\!\left(
	\widehat W(Z,s)
	+
	D\bigl(Z,s,\nabla_{\widehat W}L(Z,s)\bigr)\odot\Delta Z
	\right).
	\]
	Since $D\bigl(Z,s,\nabla_{\widehat W}L(Z,s)\bigr)$ is fixed at the current point, the map
	\[
	\Delta Z\mapsto
	\widehat W(Z,s)
	+
	D\bigl(Z,s,\nabla_{\widehat W}L(Z,s)\bigr)\odot\Delta Z
	\]
	is affine, and its differential in direction $H$ is
	\[
	D\bigl(Z,s,\nabla_{\widehat W}L(Z,s)\bigr)\odot H.
	\]
	Therefore, by the chain rule,
	\[
	\nabla_{\Delta Z} \bar{L}_{Z,s}(\Delta Z)
	=
	\nabla_{\widehat W}
	L\!\left(
	\widehat W(Z,s)
	+
	D\bigl(Z,s,\nabla_{\widehat W}L(Z,s)\bigr)\odot\Delta Z
	\right)
	\odot
	D\bigl(Z,s,\nabla_{\widehat W}L(Z,s)\bigr).
	\]
	Evaluating at $\Delta Z=0$ gives
	\[
	\begin{aligned}
		&\left.
		\nabla_{\Delta Z} \bar{L}_{Z,s}(\Delta Z)
		\right|_{\Delta Z=0}
		\\
		&=
		\nabla_{\widehat W} L\bigl(\widehat W(Z,s)\bigr)
		\odot
		D\bigl(Z,s,\nabla_{\widehat W}L(Z,s)\bigr)
		\\
		&=
		\nabla_{\widehat W}L(Z,s)\odot D\bigl(Z,s,\nabla_{\widehat W}L(Z,s)\bigr)
		=
		\nabla_Z L(Z,s),
	\end{aligned}
	\]
	which proves the exact-gradient claim.
	
	For the steepest-descent statement, consider the first-order model
	\[
	m(\Delta Z)
	=
	\bar{L}_{Z,s}(0)+\langle \nabla_Z L(Z,s),\Delta Z\rangle_F .
	\]
	For any $\Delta Z$ with $\|\Delta Z\|_F\le r$, Cauchy--Schwarz gives
	\[
	\langle \nabla_Z L(Z,s),\Delta Z\rangle_F
	\geq
	-\|\nabla_Z L(Z,s)\|_F\,\|\Delta Z\|_F
	\geq
	-r\|\nabla_Z L(Z,s)\|_F.
	\]
	Equality is attained at
	\[
	\Delta Z
	=
	-r\,
	\frac{\nabla_Z L(Z,s)}{\|\nabla_Z L(Z,s)\|_F},
	\]
	whenever $\nabla_Z L(Z,s)\neq 0$. Hence $-\nabla_Z L(Z,s)$ is the steepest descent direction of $\bar{L}_{Z,s}$ at $\Delta Z=0$ under the Frobenius norm.
\end{proof}

\paragraph{Restatement of Proposition~\ref{prop:code-one-step-consistency}.}
Let $\Delta Z\in\{-1,0,1\}^{d_{\mathrm{out}}\times d_{\mathrm{in}}}$ be a feasible one-step code move such that
\[
Z+\Delta Z\in\mathcal Z,
\qquad
\Delta Z\odot\nabla_{\widehat W}L(Z,s)\le 0
\quad\text{entrywise}.
\]
Then the surrogate linear term matches the exact first-order deployed-weight change:
\[
\langle \nabla_Z L(Z,s),\Delta Z\rangle_F
=
\left\langle
\nabla_{\widehat W}L(Z,s),
\widehat W(Z+\Delta Z,s)-\widehat W(Z,s)
\right\rangle_F .
\]

\begin{proof}[Proof of Proposition~\ref{prop:code-one-step-consistency}]
	We first prove the exact first-order identity
	\[
	\langle \nabla_Z L(Z,s),\Delta Z\rangle_F
	=
	\left\langle
	\nabla_{\widehat W}L(Z,s),
	\widehat W(Z+\Delta Z,s)-\widehat W(Z,s)
	\right\rangle_F.
	\]
	Fix one coordinate $(i,j)$. If $\Delta Z_{ij}=0$, both sides contribute zero. If $\Delta Z_{ij}=1$, feasibility gives $Z_{ij}<K$, and gradient alignment gives $(\nabla_{\widehat W}L(Z,s))_{ij}\le 0$. When $(\nabla_{\widehat W}L(Z,s))_{ij}<0$, the definition of $D\bigl(Z,s,\nabla_{\widehat W}L(Z,s)\bigr)$ selects the forward gap, so
	\[
	\begin{aligned}
	\widehat W_{ij}(Z+\Delta Z,s)-\widehat W_{ij}(Z,s)
	&=S_{ij}(s)\bigl(c_{Z_{ij}+1}-c_{Z_{ij}}\bigr)\\
	&=d^+_{ij}(Z,s)\\
	&=\bigl[D\bigl(Z,s,\nabla_{\widehat W}L(Z,s)\bigr)\bigr]_{ij}\Delta Z_{ij}.
	\end{aligned}
	\]
	If instead $(\nabla_{\widehat W}L(Z,s))_{ij}=0$, then both first-order contributions are zero. If $\Delta Z_{ij}=-1$, feasibility gives $Z_{ij}>1$, and gradient alignment gives $(\nabla_{\widehat W}L(Z,s))_{ij}\ge 0$. When $(\nabla_{\widehat W}L(Z,s))_{ij}>0$, the definition of $D\bigl(Z,s,\nabla_{\widehat W}L(Z,s)\bigr)$ selects the backward gap, so
	\[
	\begin{aligned}
	\widehat W_{ij}(Z+\Delta Z,s)-\widehat W_{ij}(Z,s)
	&=S_{ij}(s)\bigl(c_{Z_{ij}-1}-c_{Z_{ij}}\bigr)\\
	&=-d^-_{ij}(Z,s)\\
	&=\bigl[D\bigl(Z,s,\nabla_{\widehat W}L(Z,s)\bigr)\bigr]_{ij}\Delta Z_{ij}.
	\end{aligned}
	\]
	Again, if $(\nabla_{\widehat W}L(Z,s))_{ij}=0$, both first-order contributions are zero. Therefore, for every coordinate,
	\[
	(\nabla_{\widehat W}L(Z,s))_{ij}
	\bigl(\widehat W_{ij}(Z+\Delta Z,s)-\widehat W_{ij}(Z,s)\bigr)
	=
	(\nabla_{\widehat W}L(Z,s))_{ij}
	\bigl[D\bigl(Z,s,\nabla_{\widehat W}L(Z,s)\bigr)\bigr]_{ij}
	\Delta Z_{ij}.
	\]
	Summing over coordinates yields
	\begin{align}\nonumber
	\left\langle
\nabla_{\widehat W}L(Z,s),
\widehat W(Z+\Delta Z,s)-\widehat W(Z,s)
\right\rangle_F
=
\left\langle
\nabla_{\widehat W}L(Z,s)\odot D\bigl(Z,s,\nabla_{\widehat W}L(Z,s)\bigr),
\Delta Z
\right\rangle_F\\\nonumber
=
\langle \nabla_Z L(Z,s),\Delta Z\rangle_F,
	\end{align}
	which proves \eqref{eq:one-step-consistency}.
\end{proof}

\paragraph{Restatement of Corollary~\ref{prop:code-one-step-upper-bound}.}
Under the assumptions of Proposition~\ref{prop:code-one-step-consistency}, if $L$ is $\beta$-smooth as a function of the deployed weight in a neighborhood of $\widehat W(Z,s)$, then
\[
L(Z+\Delta Z,s)
\le
L(Z,s)
+
\langle \nabla_Z L(Z,s),\Delta Z\rangle_F
+
\frac{\beta}{2}
\left\|
\widehat W(Z+\Delta Z,s)-\widehat W(Z,s)
\right\|_F^2 .
\]

\begin{proof}[Proof of Corollary~\ref{prop:code-one-step-upper-bound}]
	If $L$ is $\beta$-smooth as a function of the deployed weight in a neighborhood of $\widehat W(Z,s)$, the descent lemma gives
\begin{align}\nonumber
	L\bigl(\widehat W(Z+\Delta Z,s)\bigr)
\le
L\bigl(\widehat W(Z,s)\bigr)
+
\left\langle
\nabla_{\widehat W} L\bigl(\widehat W(Z,s)\bigr),
\widehat W(Z+\Delta Z,s)-\widehat W(Z,s)
\right\rangle_F\\\nonumber
+
\frac{\beta}{2}
\left\|
\widehat W(Z+\Delta Z,s)-\widehat W(Z,s)
\right\|_F^2.
\end{align}
	By Proposition~\ref{prop:code-one-step-consistency},
	\[
	\left\langle
	\nabla_{\widehat W} L\bigl(\widehat W(Z,s)\bigr),
	\widehat W(Z+\Delta Z,s)-\widehat W(Z,s)
	\right\rangle_F
	=
	\langle \nabla_Z L(Z,s),\Delta Z\rangle_F.
	\]
	Substituting this identity together with
	\[
	L(Z,s)=L\bigl(\widehat W(Z,s)\bigr),
	\qquad
	L(Z+\Delta Z,s)=L\bigl(\widehat W(Z+\Delta Z,s)\bigr)
	\]
	yields
	\[
	L(Z+\Delta Z,s)
	\le
	L(Z,s)
	+
	\langle \nabla_Z L(Z,s),\Delta Z\rangle_F
	+
	\frac{\beta}{2}
	\left\|
	\widehat W(Z+\Delta Z,s)-\widehat W(Z,s)
	\right\|_F^2,
	\]
	as claimed.
\end{proof}

\section{A Concrete Factorized Candidate-Sampling Distribution}
\label{app:proposal-spec}

To instantiate the local candidate-sampling rule in Section~\ref{sec:guided}, define the neighborhood around the guided reference $Z^{\mathrm{ref}}$ by
\begin{equation*}
	\mathcal N^{(\rho)}(Z^{\mathrm{ref}})
	=
	\left\{
	Z' \in \mathcal Z \mid
	\|Z'-Z^{\mathrm{ref}}\|_\infty \le \rho
	\right\},
\end{equation*}
where $\rho\in\mathbb Z^+$ is the code search radius.
For efficient sampling in high-dimensional spaces, we sample each matrix entry $(i,j)\in[1,d_{\text{out}}]\times[1,d_{\text{in}}]$ independently from a discrete scalar distribution $p_{ij}(u\mid Z^{\mathrm{ref}})$, which jointly induces
\[
p(Z^{m}\mid Z^{\mathrm{ref}})
=
\prod\nolimits_{i=1}^{d_{\text{out}}}\prod\nolimits_{j=1}^{d_{\text{in}}}
p_{ij}(Z^{m}_{ij}\mid Z^{\mathrm{ref}}).
\]
We choose
\begin{equation*}
	p_{ij}(u\mid Z^{\mathrm{ref}})
	=
	\frac{\phi(|u-Z^{\mathrm{ref}}_{ij}|)}
	{\sum_{v \in \mathcal N^{(\rho)}_{ij}(Z^{\mathrm{ref}})} \phi(|v-Z^{\mathrm{ref}}_{ij}|)},
	\qquad
	\mathcal N^{(\rho)}_{ij}(Z^{\mathrm{ref}})
	=
	\left\{
	u \in [K]\mid
	|u-Z^{\mathrm{ref}}_{ij}|\le\rho
	\right\}
\end{equation*}
where $\phi(x)$ is a monotonically decreasing function, such as $(x+\varepsilon)^{-1}$ with $\varepsilon>0$ or $\exp(-x^2)$, so that code states closer to $Z^{\mathrm{ref}}$ receive larger coordinatewise probability mass.

\section{Expected Descent of the Code-Search Substep}
\label{app:expected-descent}

For the proof of Proposition~\ref{prop:expected-code-descent}, we analyze the deterministic full-batch code-search substep of Algorithm~\ref{alg:code-search} with a fixed scale $s$. At iteration $t$, define the code-space surrogate gradient and continuous reference step by
\[
g_t=\nabla_Z L(Z_t,s),
\qquad
\bar{\Delta}_t=-\eta_Z g_t,
\qquad
Z_t^{\mathrm{ref}}=Z_t+\bar{\Delta}_t.
\]
For a sampled candidate state $Z_t^m$, let
\[
\Delta_t^m=Z_t^m-Z_t
\]
denote its code update relative to the current state.
The proof uses the following assumptions.

\paragraph{Assumption A1 (smoothness on admissible sampled moves).}
There exist constants $\beta>0$ and $\kappa>0$ such that, for every iteration $t$ and every candidate update $\Delta$ considered below, Corollary~\ref{prop:code-one-step-upper-bound} applies and
\begin{equation}
	\left\|
	\widehat W(Z_t+\Delta,s)-\widehat W(Z_t,s)
	\right\|_F
	\le
	\kappa \|\Delta\|_F.
	\label{eq:weight-lipschitz-delta}
\end{equation}

\paragraph{Assumption A2 (near-reference proposal mass).}
Fix some $c\in[0,1)$. For each iteration $t$, define the good-update set
\begin{equation*}
	\mathcal G_t
	=
	\Bigl\{
	\Delta
	\;\Big|\;
	\Delta \text{ is gradient-aligned and one-step feasible, and }
	\|\Delta+\eta_Z g_t\|_F
	\le
	c\,\eta_Z\|g_t\|_F
	\Bigr\}.
\end{equation*}
Let $q_t$ denote the total proposal mass of candidate states whose update belongs to $\mathcal G_t$:
\begin{equation*}
	q_t
	=
	\sum\nolimits_{Z'\,:\,Z'-Z_t\in\mathcal G_t}
	p(Z'\mid Z_t^{\mathrm{ref}}).
\end{equation*}

\begin{proof}[Proof of Proposition~\ref{prop:expected-code-descent}]
	Fix an iteration $t$. If $\Delta\in\mathcal G_t$, then by definition of $\mathcal G_t$,
	\[
	\langle g_t,\Delta\rangle
	=
	\langle g_t,-\eta_Z g_t\rangle
	+
	\langle g_t,\Delta+\eta_Z g_t\rangle
	\le
	-\eta_Z\|g_t\|_F^2
	+
	\|g_t\|_F\,\|\Delta+\eta_Z g_t\|_F
	\le
	-(1-c)\eta_Z\|g_t\|_F^2,
	\]
	and also
	\[
	\|\Delta\|_F
	\le
	\|\eta_Z g_t\|_F+\|\Delta+\eta_Z g_t\|_F
	\le
	(1+c)\eta_Z\|g_t\|_F.
	\]
	Combining this with \eqref{eq:weight-lipschitz-delta} gives
	\[
	\left\|
	\widehat W(Z_t+\Delta,s)-\widehat W(Z_t,s)
	\right\|_F^2
	\le
	\kappa^2(1+c)^2\eta_Z^2\|g_t\|_F^2.
	\]
	Therefore, by Corollary~\ref{prop:code-one-step-upper-bound},
	\[
	L(Z_t+\Delta,s)
	\le
	L(Z_t,s)
	-
	\left(
	1-c-\frac{\beta\kappa^2\eta_Z}{2}(1+c)^2
	\right)
	\eta_Z\|g_t\|_F^2
	=
	L(Z_t,s)-\alpha\eta_Z\|g_t\|_F^2.
	\]
	
	Now let
	\[
	E_t
	=
	\Bigl\{
	\text{at least one of the $M$ sampled candidates has update in }\mathcal G_t
	\Bigr\}.
	\]
	By independence of the $M$ samples,
	\[
	\Pr(E_t\mid Z_t)
	=
	1-(1-q_t)^M.
	\]
	On the event $E_t$, the sampled set contains a candidate whose loss is at most
	\(
	L(Z_t,s)-\alpha\eta_Z\|g_t\|_F^2
	\).
	On $E_t$, the best-of-$M$ rule therefore returns a candidate whose loss is no larger than this value. On the complementary event $E_t^c$, the strict accept-if-improving rule gives $L(Z_{t+1},s)\le L(Z_t,s)$. Combining the two cases yields the pointwise bound
	\[
	L(Z_{t+1},s)
	\le
	L(Z_t,s)-\alpha\eta_Z\|g_t\|_F^2 \mathbf 1_{E_t}.
	\]
	Taking conditional expectation gives
	\[
	\mathbb E\!\left[L(Z_{t+1},s)\mid Z_t\right]
	\le
	L(Z_t,s)
	-
	\alpha\,\eta_Z\bigl[1-(1-q_t)^M\bigr]\|g_t\|_F^2,
	\]
	which proves the claimed one-step descent inequality.
\end{proof}

\section{Low-Rank Code Update}
\label{sec:lora-code}
The core search rule described above only requires a code matrix $Z\in\mathcal Z$ to induce weight $\widehat W(Z)$, and a differentiable loss to compute \eqref{eq:glat}. As a result, the same rule can also be used with other parameterizations.
Low-rank parameterizations are widely used in LLM fine-tuning to reduce the number of optimization variables and improve parameter efficiency. We apply this principle in code space by parameterizing a local code update as a low-rank decomposition with two integer-valued factors:
\begin{equation*}
	\Delta Z(A,B)=AB^\top,
	\qquad
	A\in\mathbb Z^{d_{\mathrm{out}}\times R},\quad
	B\in\mathbb Z^{d_{\mathrm{in}}\times R}.
\end{equation*}
This reduces the number of searched coordinates from $d_{\mathrm{out}}d_{\mathrm{in}}$ to $R(d_{\mathrm{out}}+d_{\mathrm{in}})$. 
Unlike ordinary LoRA, $A$ and $B$ are not inference-time residual adapters; after training, only the deployed code state $\mathrm{clip}(Z+\Delta Z(A,B),1,K)$ needs to be stored. The factor-wise guidance follows from the same surrogate chain rule. Given $\nabla_Z L$ by \eqref{eq:glat} at the induced code state, we use
\begin{equation*}
	\nabla_A L = \nabla_Z L\,B,
	\qquad
	\nabla_B L = \left(\nabla_Z L\right)^\top A.
\end{equation*}
We form guided references for $A$ and $B$, sample integer-valued factor candidates, and select by the deployed loss after merging the induced code update.

\end{document}

%% file: math_commands.tex
\usepackage{amsmath,amsfonts,bm}

\def\eqref#1{equation~\ref{#1}}

\def\1{\bm{1}}

\DeclareMathAlphabet{\mathsfit}{\encodingdefault}{\sfdefault}{m}{sl}
\SetMathAlphabet{\mathsfit}{bold}{\encodingdefault}{\sfdefault}{bx}{n}

